\documentclass{article}
\usepackage[utf8]{inputenc}
\usepackage{style}
\usepackage{booktabs}
\usepackage{xfrac}
\usepackage{xcolor}
\usepackage{float}
\RequirePackage{natbib}

\renewcommand{\cite}[1]{\citep{#1}}
\usepackage{hyperref}
\definecolor{mydarkblue}{rgb}{0,0.08,0.45}
\hypersetup{ %
pdftitle={},
pdfauthor={},
pdfsubject={},
pdfkeywords={},
pdfborder=0 0 0,
pdfpagemode=UseNone,
colorlinks=true,
linkcolor=mydarkblue,
citecolor=mydarkblue,
filecolor=mydarkblue,
urlcolor=mydarkblue,
pdfview=FitH}

\title{\vspace{-1cm}Learning Rate Annealing Can Provably Help Generalization,\\Even for Convex Problems}
\author{Preetum Nakkiran\\
Harvard University\footnote{Part of this work completed while the author
was interning at OpenAI.}
}
\date{}

\newcommand{\cD}{\mathcal{D}}

\newcommand{\bs}{\beta^*}

\def\shownotes{1}  \ifnum\shownotes=1
\newcommand{\authnote}[2]{[ #2 --#1 ]}
\else
\newcommand{\authnote}[2]{}
\fi

\begin{document}

\maketitle
\vspace{-0.5cm}
\begin{abstract}
Learning rate schedule can significantly affect
generalization performance in modern neural networks,
but the reasons for this are not yet understood.
\citet{li2019explaining} recently proved this behavior
can exist in a simplified non-convex neural-network setting.
In this note, we show that this phenomenon can exist even for convex learning problems -- in particular, linear regression in 2 dimensions.

We give a toy convex problem where learning rate annealing (large initial learning rate, followed by small learning rate)
can lead gradient descent to minima with provably better generalization than using a small learning rate throughout.
In our case, this occurs due to a combination of
the mismatch between the test and train loss landscapes,
and early-stopping.
\end{abstract}

\section{Introduction}
The learning rate schedule of stochastic gradient descent
is known to strongly affect the generalization of modern neural networks,
in ways not explained by optimization considerations alone.
In particular, training with a large initial learning-rate followed by a smaller annealed learning rate
can vastly outperform training with the smaller learning rate throughout -- even
when allowing both to reach the same train loss.
The recent work of~\citet{li2019explaining} sheds some light on this, by showing a
simplified neural-network setting in which this behavior provably occurs.

\begin{figure}[H]
    \centering
    \begin{subfigure}[b]{0.3\textwidth}
        \includegraphics[width=\textwidth]{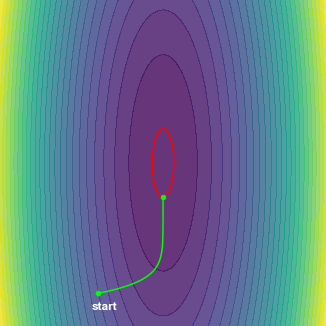}
        \caption{Small learning rate.}
        \label{fig:flow}
    \end{subfigure}
    ~
    \begin{subfigure}[b]{0.3\textwidth}
        \includegraphics[width=\textwidth]{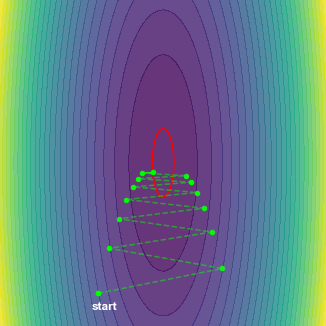}
        \caption{Large, then small learning rate.}
        \label{fig:gd}
    \end{subfigure}
    ~
    \begin{subfigure}[b]{0.3\textwidth}
        \includegraphics[width=\textwidth]{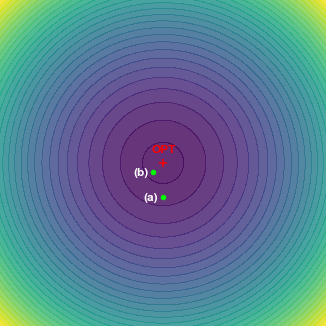}
        \caption{Test loss landscape.}
        \label{fig:dist}
    \end{subfigure}
    \caption{{\bf How learning rate can affect generalization.} Minima with the same train loss can have different test losses, if the empirical loss landscape is distorted relative to the population loss (e.g. due to undersampling/overparameterization).
    Figures \ref{fig:flow} and \ref{fig:gd} show gradient descent on the train loss, with different learning rate schedules, stopping at the same small value of train-loss (red ellipse).
    Note that the minima found in \ref{fig:flow} is worse than \ref{fig:gd} in test loss (shown in Figure \ref{fig:dist}), through they have identical train loss.
    }\label{fig:main}
\end{figure}

It may be conjectured that non-convexity is crucial for learning-rate schedule to affect generalization (for example, because
strongly-convex objectives have unique global minima).
However, we show this behavior can appear even in strongly-convex learning problems.
The key insight is that although strongly-convex problems have a unique minima,
early-stopping breaks this uniqueness: there is a set of minima with the same train loss $\eps$.
And these minima can have different generalization properties
when the train loss surface does not closely approximate the test loss surface.
In our setting, a large learning-rate prevents
gradient descent from optimizing along high-curvature directions,
and thus effectively regularizes against directions that are
high-curvature in the train set, but low-curvature on the true distribution
(see Figure~\ref{fig:main}).

Technically, we model a small learning rate by considering gradient flow.
Then we compare the following optimizers:
\begin{enumerate}[(A)]
    \item Gradient descent with an infinitesimally small step size (i.e. gradient flow).
    \item Gradient descent with a large step size, followed by gradient flow.
\end{enumerate}
We show that for a particular linear regression problem,
if we run both (A) and (B) until reaching the same $\eps > 0$ train loss,
then with probability $3/4$ over the train samples,
the test loss of (A) is twice that of (B).
That is, starting with a large learning rate and then annealing to
an infinitesimally small one helps generalization significantly.

{\bf Our Contribution.}
We show that non-convexity is not required to reproduce
an effect of learning rate on generalization that is observed
in deep learning models.
We give a simple model where the mechanisms behind this can be theoretically understood, which shares some features with deep learning models in practice.
We hope such simple examples
can yield insight and intuition that may eventually
lead to a better understanding of the effect of learning rate
in deep learning.

{\bf Organization.}
In Section~\ref{sec:main}
we describe the example.
In Section~\ref{sec:discuss}
we discuss features of the example, and its potential implications
(and non-implications) in deep learning.
In Appendix~\ref{sec:formal}, we include formal proofs and provide
a mild generalization of the example in Section~\ref{sec:main}.

\subsection{Related Work}
This work was inspired by \citet{li2019explaining},
which gives a certain simplified neural-network setting
in which learning-rate schedule
provably affects generalization, despite performing identically with respect to optimization.
The mechanisms and intuitions in \citet{li2019explaining} depend crucially on
non-convexity of the objective.
In contrast, in this work we provide complementary results, by showing that this behavior can occur even in convex models
due to the interaction between early-stopping and
estimation error.
Our work is not incompatible with \citet{li2019explaining}; indeed,
the true mechanisms behind generalization in real neural networks may involve both of these factors (among others).

There are many empirical and theoretical works
attempting to understand the effect of learning-rate in deep learning,
which we briefly describe here.
Several recent works study the effect of learning rate
by considering properties (e.g. curvature) of the loss landscape,
finding that different learning rate schedules can lead SGD
to regions in parameter space with different
local geometry~\cite{jastrzebski2020break, lewkowycz2020large}.
This fits more broadly into a study of the implicit bias of SGD.
For example, there is debate on whether generalization is connected
to ``sharpness'' of the minima (e.g. \citet{hochreiter1997long, keskar2016large, dinh2017sharp}).
The effect of learning-rate is also often also studied alongside the effect of batch-size, since the optimal choice of these
two parameters is coupled in practice \cite{krizhevsky2014one, goyal2017accurate, smith2017don}.
Several works suggest that  the 
\emph{stochasticity} of SGD is crucial to the effect of learning-rate,
in that different learning rates lead to different stochastic dynamics,
with different generalization behavior
\cite{smith2017bayesian,mandt2017stochastic, hoffer2017train, mccandlish2018empirical, welling2011bayesian,li2019explaining}. In particular, it may be the case that the stochasticity in the gradient estimates of SGD acts as an effective ``regularizer'' at high learning rates.
This aspect is not explicitly present in our work, and is an interesting
area for future theoretical and empirical study.


\section{Convex Example}
\label{sec:main}

The main idea is illustrated in the following linear regression problem,
as visualized in Figure~\ref{fig:main}.
Consider the data distribution $\cD$ over $(x, y) \in \R^2 \x \R$ defined as:
$$
x \in \{\vec{e_1}, \vec{e_2}\} \text{ uniformly at random };
\quad
y = \langle \beta^*, x \rangle
$$
for some ground-truth $\beta^* \in \R^2$.
We want to learn a linear model
$\hat{y}(x) := \langle \beta, x \rangle$ with small mean-squared-error on the population distribution.
Specifically, for model parameters
$\beta  = (\beta_1, \beta_2) \in \R^2$, the population loss is
$$
L_\cD(\beta) := \E_{x,y \sim D}[(\langle \beta, x \rangle - y)^2]
$$
We want to approximate $\beta^* = \argmin_\beta L_\cD(\beta)$.
Suppose we try to find this minima by drawing $n$ samples $\{x_i, y_i\}$
from $\cD$,
and performing gradient descent
on the empirical loss:
$$
\hat L_n(\beta) :=
\frac{1}{n}\sum_{i \in [n]} (\langle \beta, x_i \rangle - y_i)^2
$$
starting at $\beta=0$, and stopping when $\hat L_n \leq \eps$ for some small $\eps$.

Now for simplicity let $n = 3$,
and let the ground-truth model be $\beta^* = (\beta^*_1, \beta^*_2) \in \R^2$.
With probability $3/4$, two of the samples will have the same value of $x$ -- say this value is $x = \vec{e_1}$,
so the samples $(x_i, y_i)$ are $\{(e_1, \beta^*_1), (e_1, \beta^*_1), (e_2, \beta^*_2)\}$.
In this case the empirical loss is
\begin{equation}
\label{eqn:trainloss}
\hat L_n(\beta)
=
\underbrace{\frac{2}{3}(\beta_1 - \beta^*_1)^2}_{\textrm{(A)}}
+
\underbrace{\frac{1}{3}(\beta_2 - \beta^*_2)^2}_{\textrm{(B)}}
\end{equation}

which is \emph{distorted} compared to the population loss,
since we have few samples relative to the dimension.
The population loss is:
\begin{equation}
\label{eqn:testloss}
L_\cD(\beta)
=
\frac{1}{2}(\beta_1 - \beta^*_1)^2
+
\frac{1}{2}(\beta_2 - \beta^*_2)^2
\end{equation}

The key point is that although the global minima of $L_\cD$ and $\hat L_n$ are identical,
their level sets are not: not all points of small train loss
$\hat L_n(\beta)$ have identical
test loss $L_\cD(\beta)$.
To see this, refer to Equation~\ref{eqn:trainloss},
and consider two different
ways that train loss $\hat{L}_n = \eps$ could be achieved.
In the ``good'' situation,
term (A) in Equation~\ref{eqn:trainloss} is $\eps$,
and term (B) is $0$.
In the ``bad'' situation,
term (A) is $0$ and term (B) is $\eps$.
These two cases have test losses which differ by a factor of two,
since terms are re-weighted differently in the test loss
(Equation~\ref{eqn:testloss}). This is summarized in the below table.
\begin{center}
\begin{tabular}{l l c c }\toprule
& Residual $(\beta - \beta^*)$ & Train Loss $\hat{L}_n(\beta)$ & Test Loss $L_\cD(\beta)$ \\
\midrule
Good: & $[\sqrt{\frac{3\eps}{2}}, 0]$ & $\eps$ & $0.75\eps$ \\
Bad: & $[0, \sqrt{3\eps}]$ & $\eps$ & $1.5\eps$ \\
\end{tabular}
\end{center}
Now, if our optimizer stops at train loss $\eps$,
it will pick one of the points in $\{\beta : \hat{L}_n(\beta)= \eps\}$.
We see from the above that some of these points are twice as bad as others.

Notice that \emph{gradient flow} on $\hat L_n$ (Equation~\ref{eqn:trainloss})
will optimize twice as fast along the $\beta_1$ coordinate compared to $\beta_2$. The gradient flow
dynamics of the residual $(\beta - \beta^*)$ are:
\[
\frac{d\beta}{dt} = -\nabla_\beta \hat{L}_n
\implies
\begin{cases}
\frac{d}{dt}(\beta_1 - \beta_1^*) = -\frac{4}{3}(\beta_1 - \beta_1^*)\\
\frac{d}{dt}(\beta_2 - \beta_2^*) = -\frac{2}{3}(\beta_2 - \beta_2^*)
\end{cases}
\]
This will tend to find solutions closer to the ``Bad'' solution from the above table, where
$(\beta_1 - \beta_1^*)^2 \approx 0$
and
$(\beta_2 - \beta_2^*)^2 \gg 0$.

However, \emph{gradient descent} with a large step size
can oscillate on the $\beta_1$ coordinate, and
achieve low train loss by optimizing $\beta_2$ instead.
Then in the second stage, once the learning-rate is annealed,
gradient descent will optimize on $\beta_1$ while keeping
the $\beta_2$ coordinate small.
This will lead to a minima closer to the ``Good'' solution,
where $(\beta_2 - \beta_2^*) \approx 0$.
These dynamics are visualized in Figure~\ref{fig:main}.

This example is formalized in the following claim.
The proof is straightforward, and included
in Appendix~\ref{sec:formal}.
\begin{claim}
\label{corr}
For all $0 < \alpha < 1$, 
there exists a distribution $\cD$ over $(x, y) \in \R^2 \x \R$
and a learning-rate $\eta > 0$ (the ``large'' learning-rate)
such that for all $ 0 < \eps < 1$, the following holds.
With probability $3/4$ over $n=3$ samples:
\begin{enumerate}
    \item Gradient flow from $0$-initialization,
    early-stopped at train loss $\eps$,
    achieves population loss
    $$L_\cD(\beta_{\emph{bad}}) \geq \frac{3}{2}\eps(1-\alpha)$$ 
    \item Annealed gradient descent from $0$-initialization (i.e. gradient descent with stepsize $\eta$ for $K$ steps, followed by gradient flow)
    stopped at train loss $\eps$,
    achieves population loss
    $$L_\cD(\beta_{\emph{good}}) \leq \frac{3}{4} \eps + \exp(-\Omega(K))$$
\end{enumerate}

In particular, since $\alpha$ can be taken arbitrarily small, and $K$ taken arbitrarily large,
this implies that gradient flow achieves a population loss
twice as high as
gradient descent with a careful step size, followed by gradient flow.

Moreover, with the remaining $1/4$ probability,
the samples will be such that gradient flow and annealed gradient descent
behave identically:
$$
\beta_{\emph{good}}
= 
\beta_{\emph{bad}}
$$
\end{claim}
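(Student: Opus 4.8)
The plan is to condition on the two sample configurations separately. On the probability-$3/4$ event that the three draws split $2$--$1$, we may assume (relabeling $e_1, e_2$) that the empirical loss is Equation~\ref{eqn:trainloss}, and I will work throughout with the residual coordinates $u := \beta_1 - \beta^*_1$ and $v := \beta_2 - \beta^*_2$, which start at $(u_0, v_0) = (-\beta^*_1, -\beta^*_2)$. The one computation that drives everything is the elementary identity obtained by eliminating $v^2$ between Equations~\ref{eqn:trainloss} and~\ref{eqn:testloss}: on the level set $\{\hat L_n = \eps\}$ one has
\[
L_\cD = \frac{3}{2}\eps - \frac{1}{2}u^2 .
\]
Thus both halves of the claim reduce to controlling a single scalar --- the residual $u^2$ in the high-curvature coordinate --- at the instant the train loss first hits $\eps$: gradient flow should leave $u^2$ as small as possible (pushing $L_\cD$ up toward $\frac{3}{2}\eps$), while annealed descent should leave $u^2$ as large as possible (pushing $L_\cD$ down toward $\frac{3}{4}\eps$). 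I would fix the distribution by taking $\beta^*_1$ a moderate constant (say with $\frac{2}{3}(\beta^*_1)^2 > 1$) and $\beta^*_2$ large, with the precise size of $\beta^*_2$ chosen at the end to absorb $\alpha$.

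For Part 1, solve the decoupled linear flow exactly: $u(t) = u_0 e^{-4t/3}$ and $v(t) = v_0 e^{-2t/3}$, so along the trajectory $u = (u_0/v_0^2)\,v^2$. Substituting into $\hat L_n = \eps$ gives a quadratic $c\,w^2 + w = 3\eps$ in $w := v^2$, with $c := 2u_0^2/v_0^4$; since $c, w > 0$ this forces $w \le 3\eps$, whence $u^2 = \frac{c}{2}w^2 \le \frac{9c}{2}\eps^2 \le \frac{9c}{2}\eps$ using $\eps < 1$. Choosing $\beta^*_2$ so that $c \le \frac{2\alpha}{3}$ yields $u(T)^2 \le 3\alpha\eps$, and the identity above gives $L_\cD(\beta_{\text{bad}}) \ge \frac{3}{2}\eps(1-\alpha)$ for every $\eps \in (0,1)$ simultaneously.

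For Part 2, run the large-step phase as two decoupled scalar maps, $u_{k+1} = (1 - \frac{4}{3}\eta)u_k$ and $v_{k+1} = (1 - \frac{2}{3}\eta)v_k$, and pick $\eta$ in the window $[\frac{3}{2}, 3)$: this makes the $u$-factor have magnitude at least $1$ (so the high-curvature coordinate is never optimized, $|u_K| \ge |u_0|$) while the $v$-factor has magnitude strictly below $1$ (so $|v_K| \le (1 - \frac{2}{3}\eta)^K|v_0| = \exp(-\Omega(K))$, with the endpoint $\eta = \frac{3}{2}$ even giving $v_K = 0$). Because $\frac{2}{3}u_0^2 > 1 > \eps$, the subsequent gradient flow must genuinely decrease the loss to reach $\eps$, and at that stopping point $\frac{2}{3}u^2 = \eps - \frac{1}{3}v^2 \le \eps$ gives $u^2 \le \frac{3}{2}\eps$, so $L_\cD(\beta_{\text{good}}) = \frac{1}{2}u^2 + \frac{1}{2}v^2 \le \frac{3}{4}\eps + \frac{1}{2}v_K^2 = \frac{3}{4}\eps + \exp(-\Omega(K))$.

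Finally, on the probability-$1/4$ event that all three samples coincide, the empirical loss depends on only one coordinate, so the orthogonal coordinate is frozen at its initial value under both schedules and both optimizers halt on the same one-dimensional level set $\{(\beta_1 - \beta^*_1)^2 = \eps\}$, giving identical train and test loss. I expect the main obstacle to be discharging the three requirements with a single learning rate and a single distribution: the same $\eta$ that refuses to optimize the high-curvature direction in the $2$--$1$ case necessarily destabilizes the (still higher-curvature) lone coordinate in the degenerate case, so to upgrade ``identical test loss'' to the literal statement $\beta_{\text{good}} = \beta_{\text{bad}}$ I would track the sign of the diverging iterate and choose the parity of $K$ (e.g.\ $K$ even) so that the post-anneal gradient flow re-approaches $\beta^*_1$ from the same side as plain gradient flow. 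This parity choice affects only signs, not the magnitudes used in Parts 1--2, so it remains compatible with taking $K$ large.
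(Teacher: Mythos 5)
Your core computations are sound, and your route is essentially the paper's mechanism specialized to two dimensions (the paper proves a general eigenvalue-gap lemma, Lemma~\ref{thm:main}, and instantiates it; you do the direct calculation via the level-set identity $L_\cD = \tfrac{3}{2}\eps - \tfrac{1}{2}u^2$ and the trajectory relation $u = (u_0/v_0^2)v^2$). However, there is one genuine gap: your choice of distribution is asymmetric ($\beta^*_1$ moderate, $\beta^*_2$ large), and the opening ``relabeling'' step is not legitimate, because the distribution must be fixed before the samples are drawn. The probability-$3/4$ event splits into two sub-events of probability $3/8$ each: the duplicated sample is $e_1$, or it is $e_2$. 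Your Part 1 argument works only in the first. In the second, the high-curvature train coordinate is $\beta_2$, so in your notation $u_0 = -\beta^*_2$ (large) and $v_0 = -\beta^*_1$ (moderate), and $c = 2(\beta^*_2)^2/(\beta^*_1)^4$ is enormous rather than $\leq \tfrac{2\alpha}{3}$. Concretely, your stopping quadratic $c\,w^2 + w = 3\eps$ then has solution $w \approx \sqrt{3\eps/c} \to 0$, so $u^2 = \tfrac{c}{2}w^2 \approx \tfrac{3}{2}\eps$, and your own identity gives $L_\cD \approx \tfrac{3}{4}\eps$: gradient flow lands near the \emph{good} solution, and Part 1 fails on an event of probability $3/8$. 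Intuitively, the fast coordinate decays faster but starts so much larger that it still dominates the train loss at the stopping time; making one ground-truth coordinate huge is exactly how to defeat the effect you are trying to exhibit.

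The repair is what the paper does: take the ground truth symmetric and large, $\beta^* = (M, M)$ with $M = \Theta(\alpha^{-1/2})$ (the paper uses $100\alpha^{-1/2}$). Then the two sub-events are coordinate-swaps of each other, your constant becomes $c = 2/M^2 \leq \tfrac{2\alpha}{3}$, and $\tfrac{2}{3}M^2 > 1$ still holds, so both parts of your argument go through verbatim on the full probability-$3/4$ event. With that single change your proof is correct. I will also note that your handling of the probability-$1/4$ event is more careful than the paper's: with $\eta = 3/2$ the lone coordinate's gradient-descent factor is $1 - 2\eta = -2$, so the iterate diverges with alternating sign, and the literal equality $\beta_{\mathrm{good}} = \beta_{\mathrm{bad}}$ (rather than mere equality of losses) does require fixing the parity of $K$, e.g.\ $K$ even, exactly as you observe --- a point the paper's ``it is easy to see'' glosses over.
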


\section{Discussion}
\label{sec:discuss}

{\bf Similarities.} This example, though stylized, shares several features
with deep neural networks in practice.
\begin{itemize}
    \item  Neural nets trained with cross-entropy loss cannot
be trained to global emperical risk minimas,
and are instead early-stopped at some small value of train loss.
\item There is a mismatch between train and test loss landscapes (in deep learning, this is due to overparameterization/undersampling).
\item Learning rate annealing typically generalizes better than using a small constant learning rate~\cite{goodfellow2016deep}.
\item The ``large'' learning rates used in practice 
are far larger than what optimization theory would prescribe.
In particular, consecutive iterates of SGD
are often negatively-correlated with each other in the later stage of training \cite{xing2018walk},
suggesting that the iterates are oscillating around a sharp valley.
\citet{jastrzebski2018relation} also finds that the typical SGD step
is too large to optimize along the steepest directions of the loss.
\end{itemize}

{\bf Limitations.}
Our example is nevertheless a toy example,
and does not exhibit some features of real networks.
For example, our setting has only one basin of attraction.
But real networks have many basins of attraction,
and there is evidence that a high initial learning rate
influences the choice of basin 
(e.g. \citet{li2019explaining,jastrzebski2020break}).
It remains an important open question to understand
the effect of learning rate in deep learning.

\subsubsection*{Acknowledgements}
We thank John Schulman for a discussion around learning rates
that led to wondering if this can occur in convex problems.
We thank Aditya Ramesh, Ilya Sutskever, and Gal Kaplun for helpful comments on presentation,
and Jacob Steinhardt for technical discussions refining these results.

Work supported in part by the Simons Investigator Awards of Boaz Barak
and Madhu Sudan, and NSF Awards under grants
CCF 1565264, CCF 1715187, and CNS 1618026.

\bibliographystyle{apalike}
\bibliography{refs}

\appendix
\section{Formal Statements}
\label{sec:formal}

In this section, we state and prove 
Claim~\ref{corr}, along with a mild generalization
of the setting via Lemma~\ref{thm:main}.

\subsection{Notation and Preliminaries}
For a distribution $\cD$ over $(x, y) \in \R^d \x \R$,
let $L_\cD(\beta) := \E_{x,y \sim D}[(\langle x, \beta \rangle - y)^2]$
be the population loss for parameters $\beta$.
Let $\hat L_n(\vec{\beta})
= \frac{1}{n}\sum_{i} (\langle x_i, \beta \rangle - y_i)^2$
be the train loss for $n$ samples $\{(x_i, y_i)\}$.

Let $GF_{\beta_0}(L, \eps)$ 
be the function
which optimizes train loss $L$
from initial point $\beta_0$,
until the train loss reaches
early-stopping threshold $\eps$,
and then outputs the resulting parameters.

Let $GD_{\beta_0}(L, K, \tau, \eps)$ be the function which first optimizes
train loss $L$ with gradient-descent 
starting at initial point $\beta_0$, with step-size $\eta$,
for $K$ steps, and
an early-stopping threshold $\eps$,
and then continues with gradient flow with early-stopping threshold $\eps$.

For $n$ samples, let $X \in \R^{n \x d}$ be the design matrix of samples.
Let $\Sigma_X = \E_{x, y \sim \cD}[xx^T]$ be the population covariance
and let
$\hat{\Sigma}_X = \frac{1}{n}X^TX$ be the emperical covariance.

We assume throughout
that $\Sigma_X$ and $\hat{\Sigma}_X$ are simultaneously diagonalizable:
$$
\Sigma_X = U \Lambda U^T
\quad ; \quad
\hat\Sigma_X = U \Gamma U^T
$$
for diagonal $\Lambda = \textrm{diag}(\lambda_i), S = \textrm{diag}(\gamma_i)$ and orthonormal $U$.
Further, assume without loss of generality that $\hat\Sigma_X \succ 0$
(if $\hat\Sigma_X$ is not full rank, we can restrict attention to its image).

We consider optimizing the the train loss:
$$\hat{L}(\beta) = \frac{1}{n}||X(\beta - \bs)||_2^2 = ||\beta - \bs||^2_{\hat \Sigma_X}$$

Observe that the optimal population loss for a point with $\eps$ train loss is:
\begin{align}
\min_{\substack{
\text{s.t. } L_{\hat{\Sigma}}(\beta) = \eps
}}
L_{\Sigma}(\beta)
= \eps \min_{i}(\{\frac{\lambda_i}{\gamma_i}\})
\end{align}
Similarly, the worst population loss for a point with $\eps$ train loss is:
\begin{align}
\max_{\substack{
\text{s.t. } L_{\hat{\Sigma}}(\beta) = \eps
}}
L_{\Sigma}(\beta)
= \eps \max_{i}(\{\frac{\lambda_i}{\gamma_i}\})
\end{align}

Now, the gradient flow dynamics on the train loss is:
\begin{align*}
    \frac{d\beta}{dt} = - \nabla_\beta \hat L(\beta)
    = -\frac{2}{n} X^TX(\beta - \beta^*)
\end{align*}

Switching to coordinates $\delta := U^T(\beta - \beta^*)$, these dynamics are
equivalent to:
\begin{align*}
    \frac{d\delta}{dt} = -2\Gamma \delta
\end{align*}
The empirical and population losses in these coordinates can be written as:
\begin{align*}
    L_\cD(\delta) = \sum_i \lambda_i \delta_i^2 \quad ; \quad
    \hat L(\delta) = \sum_i \gamma_i \delta_i^2
\end{align*}
And the trajectory of gradient flow, from initialization $\delta(0) \in \R^d$, is
\begin{align*}
    \forall i \in [d]: ~~\delta_i(t) = \delta_i(0)e^{-2\gamma_i t}
\end{align*}

With this setup, we now state and prove the following main lemma,
characterizing the behavior of gradient flow and gradient descent
for a given sample covariance.

\subsection{Main Lemma}
\begin{lemma}
\label{thm:main}
Let $\Sigma, \hat\Sigma \in \R^{d \x d}$ be as above.
Let eigenvalues be ordered 
according to $\hat\Sigma$: $\gamma_1 \geq \gamma_2 \geq \dots \geq \gamma_d$.

Let $S = \{k, k+1, \dots, d\}$ for some $k$. That is, let $S$ index a block of ``small'' eigenvalues.
Let $\bar{S} = [n] \setminus S$.
Assume there is an \emph{eigenvalue gap} between the ``large''
and ``small'' eigenvalues, where for some $p > 0$
$$\forall i \in \bar S: \gamma_i / \gamma_k \geq 1+p.$$

For all $0 < \alpha < 1$, if the initialization
$\delta(0) = U^T(\beta_0 - \bs) \in \R^d$
satisfies

\begin{enumerate}
    \item At initialization,
    the contribution of the largest eigenspace to the train loss is at least $\eps$:
$$\gamma_1 \delta_1(0)^2 > \eps$$

\item
The eigenvalue gap is large enough to ensure that eventually
only the ``small'' eigenspace is significant, specifically:

$$
\eps^p \sum_{i \in \bar S}
\frac{\gamma_i \delta_i(0)^2}{(|S| \gamma_{j^*} \delta_{j^*}(0)^2)^{1+p}}
\leq \alpha
\quad\text{where }
j^* := \argmin_{j \in S} \gamma_j \delta_j(0)^2
$$
\end{enumerate}

Then there exists a learning-rate $\eta := 1/\gamma_1$
(the ``large'' learning-rate)
such that
\begin{enumerate}
    \item Gradient flow $\beta_{slow} \gets GF_{\vec \beta_0}(\hat L, \eps)$
    achieves population loss
    $$L_\cD(\beta_{slow}) \geq
    \eps(1-\alpha)
    \min_{j \in S}
    \frac{\lambda_j}{\gamma_j}
    $$ 
    \item Annealed gradient descent run for $K$ steps
    $\beta_{fast} \gets GD(\hat L, K, \eta, \eps)$
    has loss
    $$L_\cD(\beta_{fast}) \leq
    \eps \max_{j: \gamma_j = \gamma_1} (\frac{\lambda_j}{\gamma_j}) + \exp(-\Omega(K))$$
    In particular, if the top eigenvalue of $\Gamma$ is unique, then
    $$L_\cD(\beta_{fast}) \leq
    \eps \frac{\lambda_1}{\gamma_1} + \exp(-\Omega(K))$$
    The constant $\Omega(K)$ depends on the spectrum, and
    taking $\Omega(K) = -K\log(\max_{j : \gamma_j \neq \gamma_1} | 1 - 2 \gamma_j / \gamma_1 |)$ suffices.
\end{enumerate}

\end{lemma}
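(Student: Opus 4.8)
The plan is to work entirely in the diagonalizing coordinates $\delta = U^T(\beta - \bs)$, where both losses decouple as $\hat L(\delta) = \sum_i \gamma_i\delta_i^2$ and $L_\cD(\delta) = \sum_i \lambda_i\delta_i^2$, gradient flow acts coordinatewise as $\delta_i(t) = \delta_i(0)e^{-2\gamma_i t}$, and gradient descent with step $\eta$ acts as $\delta_i \mapsto (1 - 2\eta\gamma_i)\delta_i$. Since every quantity in the statement is expressed through the ratios $\lambda_j/\gamma_j$ and the per-coordinate train-loss contributions $\gamma_i\delta_i^2$, the whole argument reduces to tracking how each contribution $c_i(t) := \gamma_i\delta_i(t)^2$ evolves and which coordinates dominate at the stopping time.

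For part 1 (gradient flow is bad), I would first locate the stopping time $T$ defined by $\sum_i c_i(T) = \eps$, and then show that the ``large'' block $\bar S$ contributes at most $\alpha\eps$, so that the ``small'' block $S$ carries at least $(1-\alpha)\eps$ of the train loss. The population loss is then bounded below by $\sum_{j\in S}\lambda_j\delta_j(T)^2 \geq (\min_{j\in S}\lambda_j/\gamma_j)\sum_{j\in S} c_j(T) \geq (1-\alpha)\eps\min_{j\in S}\lambda_j/\gamma_j$, which is exactly the claimed bound. The content is the estimate $\sum_{i\in\bar S}c_i(T)\leq\alpha\eps$: for $i\in\bar S$ the gap $\gamma_i\geq(1+p)\gamma_k\geq(1+p)\gamma_{j^*}$ gives $e^{-4\gamma_i T}\leq (e^{-4\gamma_{j^*}T})^{1+p}$ (all bases lie in $(0,1]$), while $c_{j^*}(T)\leq\eps$ yields $e^{-4\gamma_{j^*}T}\leq \eps/(\gamma_{j^*}\delta_{j^*}(0)^2)$. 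Substituting and comparing against hypothesis 2 closes the bound, with the $|S|^{1+p}$ factor in hypothesis 2 providing slack. This inequality chain, and in particular getting the exponent $1+p$ and the role of $j^*$ right, is the main obstacle in part 1.

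For part 2 (annealed gradient descent is good), I would set $\eta = 1/\gamma_1$, so the per-step multiplier on coordinate $i$ is $1 - 2\gamma_i/\gamma_1$. On every top coordinate ($\gamma_j=\gamma_1$) this multiplier is $-1$, so $c_j$ is frozen at $\gamma_j\delta_j(0)^2$ and merely flips sign; on every other coordinate the multiplier has modulus $r_j := |1 - 2\gamma_j/\gamma_1| < 1$ (using $0 < \gamma_j < \gamma_1$ and $\hat\Sigma\succ0$), so $c_j$ contracts geometrically. Two consequences drive the proof: first, hypothesis 1 ($\gamma_1\delta_1(0)^2 > \eps$) guarantees the frozen top contribution keeps the train loss above $\eps$ throughout the $K$ descent steps, so gradient descent does not early-stop and all $K$ steps execute; second, after $K$ steps the entire non-top train loss is at most $(\sum_j c_j(0))\,r^{2K} = \exp(-\Omega(K))$ with $r = \max_{j:\gamma_j\neq\gamma_1} r_j$, matching the stated constant.

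Finally I would pass to the gradient-flow phase. Gradient flow only decreases each $c_j$, so the non-top contributions stay $\exp(-\Omega(K))$ for all of phase 2; hence at the stopping time the top coordinates satisfy $\sum_{j:\gamma_j=\gamma_1} c_j \in [\eps - \exp(-\Omega(K)),\ \eps]$. Splitting the population loss as $\sum_{j:\gamma_j=\gamma_1}(\lambda_j/\gamma_j)c_j + \sum_{j:\gamma_j\neq\gamma_1}(\lambda_j/\gamma_j)c_j$ and bounding the first sum by $\eps\max_{j:\gamma_j=\gamma_1}\lambda_j/\gamma_j$ and the second by $\exp(-\Omega(K))$ gives the claimed upper bound, specializing to $\lambda_1/\gamma_1$ when the top eigenvalue is simple. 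Here the subtlety to handle carefully is that the top eigenvalue may be degenerate -- the $-1$ multiplier freezes all such coordinates simultaneously -- and that one must confirm the small coordinates remain negligible across \emph{both} phases, not merely at the end of phase 1.
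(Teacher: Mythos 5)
Your part 2 argument (annealed gradient descent) is correct and essentially identical to the paper's: setting $\eta = 1/\gamma_1$ freezes the magnitude of every top coordinate, hypothesis 1 rules out early stopping during the descent phase, the non-top coordinates contract geometrically with ratio $\max_{j:\gamma_j\neq\gamma_1}|1-2\gamma_j/\gamma_1|<1$, and monotonicity of gradient flow keeps them negligible through phase 2. The gap is in part 1, precisely at the step you describe as "closing the bound." Your stopping-time estimate uses only the single coordinate $j^*$, via $c_{j^*}(T)\le\eps$, i.e. $e^{-4\gamma_{j^*}T}\le \eps/(\gamma_{j^*}\delta_{j^*}(0)^2)$. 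Substituting this into $c_i(T)\le\gamma_i\delta_i(0)^2\,(e^{-4\gamma_{j^*}T})^{1+p}$ and invoking hypothesis 2 yields
$$\sum_{i\in\bar S}c_i(T)\;\le\;\eps^{1+p}\sum_{i\in\bar S}\frac{\gamma_i\delta_i(0)^2}{(\gamma_{j^*}\delta_{j^*}(0)^2)^{1+p}}\;\le\;|S|^{1+p}\,\alpha\,\eps,$$
not $\alpha\eps$: hypothesis 2 normalizes by $\bigl(|S|\,\gamma_{j^*}\delta_{j^*}(0)^2\bigr)^{1+p}$, so the factor $|S|^{1+p}$ works \emph{against} you rather than "providing slack" --- you have the direction of that factor backwards. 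As written, your argument only gives $L_\cD(\beta_{slow})\ge(1-|S|^{1+p}\alpha)\,\eps\,\min_{j\in S}\lambda_j/\gamma_j$, which is weaker than the stated conclusion whenever $|S|>1$ (and vacuous once $\alpha\ge|S|^{-(1+p)}$), although it does suffice for the two-dimensional application in Claim~\ref{corr}, where $|S|=1$.

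The missing idea is to lower-bound the stopping time using the \emph{entire} small block rather than the single coordinate $j^*$. Since every $j\in S$ has $\gamma_j\le\gamma_k$ and $\gamma_j\delta_j(0)^2\ge\gamma_{j^*}\delta_{j^*}(0)^2$,
$$\eps=\hat L(\delta(T))\;\ge\;\sum_{j\in S}\gamma_j\delta_j(0)^2 e^{-4\gamma_j T}\;\ge\;|S|\,\gamma_{j^*}\delta_{j^*}(0)^2\,e^{-4\gamma_k T},$$
so $e^{-4\gamma_k T}\le\eps/(|S|\,\gamma_{j^*}\delta_{j^*}(0)^2)$ --- stronger than your estimate by exactly the factor $|S|$ you are missing. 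Raising this to the power $\gamma_i/\gamma_k\ge 1+p$ for $i\in\bar S$ (the base is at most $1$ here, a point the paper also leaves implicit but which follows from its hypotheses) and summing then matches hypothesis 2 exactly, giving $\sum_{i\in\bar S}c_i(T)\le\alpha\eps$, hence $\hat L_S(\delta(T))\ge(1-\alpha)\eps$, after which your final display produces the claimed bound. This aggregation over the whole block $S$ is precisely what the paper's proof does; with that single repair, the rest of your part 1, and all of part 2, is sound.
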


\begin{proof}

{\bf Gradient Flow.}

The proof idea is to show that gradient flow must
run for some time $T$ in order to reach $\eps$ train loss -- and
since higher-eigenvalues optimize faster, most of
the train loss will be due to contributions from the ``small''
coordinates $S$.

Let $T$ be the stopping-time of gradient flow, such that $\hat L(\delta(T)) = \eps$.
We can lower-bound the time $T$ required to reach $\eps$ train loss as:
\begin{align}
|S|\gamma_{j^*}\delta_{j^*}(0)^2 e^{-4\gamma_k T}
&\leq
\sum_{j \in S}
\gamma_{j}\delta_{j}(0)^2 e^{-4\gamma_j T}
\leq
\sum_{j \in [n]}
\gamma_{j}\delta_{j}(0)^2 e^{-4\gamma_j T}
=
\hat L(\delta(T))
= \eps\\
\implies
T &\geq
\frac{1}{4\gamma_k}\log(\frac{|S|\gamma_{j^*}\delta_{j^*}(0)^2}{\eps})
\label{eqn:T}
\end{align}

Decompose the train loss as:
$$
\hat L(\delta) =
\hat L_{\bar S}(\delta)
+
\hat L_S(\delta)
$$

Where 
$\hat L_S(\delta) := \sum_{i \in S} \gamma_i \delta_i^2$
and 
$\hat L_{\bar S}(\delta) := \sum_{i \in \bar S} \gamma_i \delta_i^2$.
Now, we can bound $\hat L_{\bar S}(\delta(T))$ as:

\begin{align*}
\hat L_{\bar S}(\delta(T))
=
\sum_{i \in \bar S} \gamma_i \delta_i(T)^2
&=
\sum_{i \in \bar S} \gamma_i \delta_i(0)^2 e^{-4 \gamma_i T}\\
&\leq 
\sum_{i \in \bar S} \gamma_i \delta_i(0)^2
(\frac{\eps}{|S|\gamma_{j^*}\delta_{j^*}(0)^2})^{\gamma_i/\gamma_k}
\tag{by Equation~\ref{eqn:T}}\\
&\leq 
\sum_{i \in \bar S} \gamma_i \delta_i(0)^2
(\frac{\eps}{|S|\gamma_{j^*}\delta_{j^*}(0)^2})^{1+p}\\
&\leq \alpha \eps
\end{align*}
Where the last inequality is due to Condition 2 of the Lemma.
Now, since
$\hat L(\delta) =
\hat L_{\bar S}(\delta)
+
\hat L_S(\delta)$,
and $\hat L(\delta(T)) = \eps$,
and 
$\hat L_{\bar S}(\delta(T)) \leq \alpha \eps$,
we must have
$$\hat L_S(\delta(T)) \geq (1-\alpha)\eps$$
This lower-bounds the population loss as desired.
\begin{align*}
L_\cD(\delta(T))
&\geq
\sum_{i \in S}\lambda_i \delta_i(T)^2
\geq
\min_{j \in S}(\frac{\lambda_j}{\gamma_j})
\sum_{i \in S}\gamma_i \delta_i(T)^2
=
\min_{j \in S}(\frac{\lambda_j}{\gamma_j})
\hat L_S(\delta(T))
\geq
\min_{j \in S}(\frac{\lambda_j}{\gamma_j})
(1-\alpha)\eps
\end{align*}

{\bf Annealed Gradient Descent.}

For annealed gradient descent, the proof idea is:
set the learning-rate such that in the gradient descent stage,
the optimizer oscillates on the first (highest-curvature) coordinate $\delta_1$,
and optimizes until the remaining coordinates are $\approx 0$.
Then, in the gradient flow stage, it optimizes on the first coordinate
until hitting the target train loss of $\eps$.
Thus, at completion most of the train loss will be due to contributions from
the ``large'' first coordinate.

Specifically, the gradient descent dynamics with stepsize $\eta$ is:
$$
\delta_i(t+1) = (1-2\eta \gamma_i)\delta_i(t)
$$
for discrete $t \in \N$.
We set $\eta := 1/\gamma_1$, so the first coordinate simply oscillates:
$\delta_1(t+1) = -\delta_1(t)$.
This is the case for the top eigenspace, i.e. all coordinates $i$ where
$\gamma_i = \gamma_1$.
The remaining coordinates decay exponentially.
That is, let $Q := \{i : \gamma_i \neq \gamma_1\}$
be the remaining coordinates (corresponding to smaller eigenspaces).
Then we have:
\begin{equation}
\label{eqn:exp-decay}
\forall i \in Q: ~\delta_i(t)^2 \leq c^{-t} \delta_i(0)^2
\end{equation}

for constant $c = \max_{j: \gamma_j \neq \gamma_1} |1-2\eta \gamma_j|^2 < 1$.

Recall, the population and empirical losses are:
\begin{align*}
    L_\cD(\delta) = \sum_i \lambda_i \delta_i^2 \quad ; \quad
    \hat L(\delta) = \sum_i \gamma_i \delta_i^2
\end{align*}
By Equation~\ref{eqn:exp-decay}, after $K$ steps of gradient descent, 
the contribution to the population
loss from the coordinates $i \in Q$ is small:
$$\sum_{i \in Q} \lambda_i \delta_i(K)^2 \leq \exp(-\Omega_{c}(K))$$

By Condition 1 of the Lemma, the train loss is still not below $\eps$
after the gradient descent stage, since the first coordinate 
did not optimize: $\delta_1(0) = \pm \delta_1(0)$.
We now run gradient flow, stopping at time $T \in \R$ when the train loss is $\eps$.
At this point, we have
$$\hat{L}(\delta(T)) \leq \eps
\implies \sum_{i \not\in Q} \gamma_i \delta_i(T)^2 \leq \eps$$
And thus, 
\begin{align*}
L_\cD(\delta(T))
&=
\sum_{i \not\in Q} \lambda_i \delta_i(T)^2 
+
\sum_{i \in Q} \lambda_i \delta_i(T)^2 \\
&\leq
(\max_{k \not\in Q}\frac{\lambda_k}{\gamma_k})
\sum_{i \not\in Q} \gamma_i \delta_i(T)^2 
+
\exp(-\Omega(K))\\
&\leq
(\max_{k \not\in Q}\frac{\lambda_k}{\gamma_k})
\eps
+
\exp(-\Omega(K))
\end{align*}
as desired.
\end{proof}

As a corollary of Lemma~\ref{thm:main}, we recover the 2-dimensional
of Claim~\ref{corr}.

\subsection{Proof of Claim~\ref{corr}}

\begin{proof}[Proof sketch of Claim~\ref{corr}]
Consider the distribution $\cD$ defined as:
Sample $x \in \{\vec{e_1}, \vec{e_2}\}$ uniformly at random, and let
$y = \langle \bs,  x \rangle$
for ground-truth $\bs = (100 \alpha^{-1/2}, 100 \alpha^{-1/2})$.
The population covariance is simply
$$\Sigma_X = 
\begin{bmatrix}
1/2 & 0\\
0 & 1/2
\end{bmatrix}
$$

With probability $3/4$, two of the three samples will have the same value of $x$.
In this case,
the sample covariance will be equal (up to reordering of coordinates) to
$$\hat{\Sigma}_X = 
\begin{bmatrix}
2/3 & 0\\
0 & 1/3
\end{bmatrix}
$$
This satisfies the conditions of Lemma~\ref{thm:main},
taking the ``small'' set of eigenvalue indices to be $S = \{2\}$.
Thus, the conclusion follows by Lemma~\ref{thm:main}.

With probability $1/4$, all the samples will all be identical,
and in particular share the same value of $x$.
Here, the optimization trajectory
will be 1-dimensional,
and it is easy to see that gradient flow and annealed gradient descent will reach identical minima.
\end{proof}

\end{document}